\def\eqref#1{equation~\ref{#1}}
\def\1{\bm{1}}
\DeclareMathAlphabet{\mathsfit}{\encodingdefault}{\sfdefault}{m}{sl}
\SetMathAlphabet{\mathsfit}{bold}{\encodingdefault}{\sfdefault}{bx}{n}
\newcommand{\cA}{\mathcal{A}}
\newcommand{\cB}{\mathcal{B}}
\newcommand{\cD}{\mathcal{D}}
\newcommand{\cG}{\mathcal{G}}
\newcommand{\cM}{\mathcal{M}}
\newcommand{\cO}{\mathcal{O}}
\newcommand{\cS}{\mathcal{S}}
\newcommand{\cX}{\mathcal{X}}
\newcommand{\cZ}{\mathcal{Z}}
\newtheorem{theorem}{Theorem}
\newtheorem{definition}{Definition}
\newtheorem{proof}{Proof}
\renewcommand{\vec}[1]{\ensuremath{\bm{#1}}}
\newcommand{\mat}[1]{\ensuremath{\mathbf{#1}}}
\title{Improving Zero-shot Generalization in Offline Reinforcement Learning using Generalized Similarity Functions}
\author{Bogdan Mazoure\thanks{Work done while at Google Brain.} \\
McGill University and Quebec AI Institute\\
\texttt{bogdan.mazoure@mail.mcgill.ca} \\
\And
Ilya Kostrikov\\
UC Berkeley\\
\AND
Ofir Nachum\\
Google Brain
\And
Jonathan Tompson\\
Google Brain
}
\newcommand\longmethod{Generalized Similarity Functions}
\newcommand\shortmethod{GSF}
\begin{document}

\maketitle

\begin{abstract}
Reinforcement learning (RL) agents are widely used for solving complex sequential decision making tasks, but still exhibit difficulty in generalizing to scenarios not seen during training. While prior online approaches demonstrated that using additional signals beyond the reward function can lead to better generalization capabilities in RL agents, i.e. using self-supervised learning (SSL), they struggle in the offline RL setting, i.e. learning from a static dataset. We show that performance of online algorithms for generalization in RL can be hindered in the offline setting due to poor estimation of similarity between observations. We propose a new theoretically-motivated framework called \longmethod\ (\shortmethod{}), which uses contrastive learning to train an offline RL agent to aggregate observations based on the similarity of their expected future behavior, where we quantify this similarity using \emph{generalized value functions}. We show that \shortmethod{} is general enough to recover existing SSL objectives while also improving zero-shot generalization performance on a complex offline RL benchmark, offline Procgen.
\end{abstract}

\section{Introduction}
Reinforcement learning (RL) is a powerful framework for solving complex tasks that require a sequence of decisions. The RL paradigm has allowed for major breakthroughs in various fields, e.g. outperforming humans on video games~\citep{mnih2015human,schwarzer2020data}, controlling stratospheric balloons~\citep{bellemare2020autonomous} and learning reward functions from robot manipulation videos~\citep{chen2021learning}. More recently, RL agents have been tested in a generalization setting, i.e. in which training involves a finite number of related tasks sampled from some distribution, with a potentially distinct sampling distribution during test time~\citep{cobbe2019quantifying,song2019observational}. The main issue for designing generalizable agents is the lack of on-policy data from tasks not seen during training: it is impossible to enumerate all variations of a real-world environment during training and hence the agent must extrapolate from a (limited) training task collection onto a broader set of problems. Since the learning agent is given no training data from test-time tasks, this problem is referred to as zero-shot generalization. In our work, we are interested in the problem of zero-shot generalization where the difference between tasks is predominantly due to perceptually distinct observations. An example of this setting is any environment with distractor features~\citep{cobbe2020leveraging,stone2021distracting}, i.e. features with no dependence on the reward signal nor the agent's decisions. This generalization setting has recently received much attention, due to its particular relevance to real-world scenarios, for example deploying a single autonomous driving agent at day or at night~\citep{liu2020cross,agarwal2021contrastive,mazoure2021cross}. 

Generalization capabilities of an agent can be analyzed through the prism of \emph{representation learning}, under which the agent's current belief about a rich and high-dimensional environment are summarized in a low-dimensional entity, called a representation. Recent work in online RL has shown that learning state representations with specific properties such as disentanglement~\citep{higgins2017darla} or linear separability~\citep{lin2020rd} can improve zero-shot generalization performance. Achieving this with limited data (i.e. offline RL) is challenging, since the representation will have a large estimation error over regions of low data coverage. A common solution to mitigate this task-specific overfitting and extracting the most information out of the data consists in introducing auxiliary learning signals other than instantaneous reward~\citep{raileanu2021decoupling}. As we show later in the paper, many such signals already contained in the dataset can be used to further improve generalization performance. For instance, the generalization performance of PPO on Procgen remains limited even when training on 200M frames, while generalization-oriented agents~\citep{raileanu2021decoupling,mazoure2021cross} can outperform it by leveraging additional auxiliary signals. However, a major issue with the aforementioned methods is their exorbitant reliance on online access to the environment, an impractical restriction for real-world scenarios.

In contrast, in many real-world scenarios access to the environment is restricted to an offline, fixed dataset of experience~\citep{ernst2005tree,lange2012batch}. A natural limitation for generalization from offline data is that policy improvement is dependent on dataset quality. Specifically, high-dimensional problems such as control from pixels require large amounts of training experience: a standard training of PPO~\citep{schulman2017proximal} for 25 million frames on Procgen~\citep{cobbe2020leveraging} generates more than 300 Gb of data, an impractical amount of data to share for offline RL research. Improving zero-shot generalization performance from an offline dataset of high-dimensional observations is therefore a hard problem due to limitations on dataset size and quality.

In this work, we are interested in improving zero-shot generalization across a family of Partially-Observable Markov decision processes~\citep[POMDPs,][]{murphy2000survey} in an offline RL setting, i.e. by training agents on a fixed dataset. We hypothesize that in order for an RL agent to be able to generalize across perceptually different POMDPs without adaptation, observations with similar future behavior should be assigned to close representations. We use the generalized value function (GVF) framework~\citep{sutton2011horde} to capture future behavior with respect to an arbitrary instantaneous signal (called cumulant) for a given state. The choice of cumulant then determines the nature of the behavioral similarity that is encouraged for generalization. For example, using reward as the signal gives rise to of reward-aware behavioral similarity such as bisimulation~\citep{ferns2004metrics,li2006towards,castro2020scalable,zhang2020learning}; using future state-action counts encourages reward-free behavioral similarity~\citep{misra2020kinematic,liu2020cross,agarwal2021contrastive,mazoure2021cross}.

Our main contributions are as follows:
\begin{enumerate}
    \item We propose \longmethod\ ({\shortmethod}), a novel self-supervised learning algorithm for reinforcement learning, that aggregates latent representations by the future behavior (or generalized value function) under their respective observations.
    \item We devise a new benchmark constructed to test zero-shot generalization of offline RL algorithms: offline Procgen. It consists of 5M transitions from 200 related levels of 16 distinct games sampled from the "easy" game mode.
    \item We evaluate performance of {\shortmethod} and other baseline methods on offline Procgen, and show that {\shortmethod} outperforms both previous state-of-the-art offline RL and representation learning baselines on the entire distribution of levels.
    \item We analyze the theoretical properties of {\shortmethod} and describe the impact of hyperparameters and cumulant functions on empirical behavior.
\end{enumerate}

\section{Related Works}

\paragraph{Generalization in reinforcement learning}  
Generalizing a model's predictions across a variety of unseen, high-dimensional inputs has been extensively studied in the static supervised learning setting~\citep{bartlett1998sample,triantafillou2019meta,valle2020generalization,liu2020universal}.
Generalization in RL has received a lot of attention: extrapolation to unseen rewards~\citep{barreto2016successor,misra2020kinematic}, observations~\citep{zhang2020learning,raileanu2021decoupling,liu2020cross,agarwal2021contrastive,mazoure2021cross} and transition dynamics~\citep{ball2021augmented}. Each generalization scenario is best solved by their respective set of methods: sufficient exploration~\citep{misra2020kinematic,agarwal2020pc}, auxiliary learning signals~\citep{srinivas2020curl,mazoure2020deep,stooke2021decoupling} or data augmentation~\citep{ball2021augmented,sinha2021s4rl}. 
Data augmentation is a promising technique, but  typically relies on handcrafted domain information, which might not be available \emph{a priori}. In fact, we will show in our experiments that generalization in the offline RL setting is poor even when using such handcrafted data augmentations, without additional representation learning mechanisms. In this work, we posit that representation learning should use instantaneous auxiliary signals in order to prevent overfitting onto a unique signal (e.g. reward across {tasks}) and improve generalization performance. Theoretical generalization guarantees have only been provided so far for limited scenarios, mostly for bandits~\citep{swaminathan2015batch}, linear MDPs~\citep{boyan1995generalization,wang2021exponential,nachum2021provable} and across reward functions~\citep{castro2010using,barreto2016successor,wang2021instabilities,touati2021learning}.

\paragraph{Representation learning} For simple {POMDPs}, near-optimal policies can be found by optimizing for the reward alone. However, more complex settings may require additional auxiliary signals in order to find state abstractions better suited for control. The problem of learning meaningful state representations (or abstractions) for planning and control has been extensively studied previously~\citep{jong2005state,li2006towards}, but saw real breakthroughs only recently, in particular due to advances in self-supervised learning (SSL). Outside of RL, SSL has achieved spectacular results by closing the gap between unsupervised and supervised learning on certain tasks~\citep{hjelm2018learning,oord2018representation,caron2020unsupervised,grill2020bootstrap}. Self-supervised representation learning has also been used to achieve state-of-the-art generalization and sample efficiency results in RL on challenging control problems such as data efficient Atari~\citep{schwarzer2020data,schwarzer2021pretraining}, DeepMind Control~\citep{agarwal2021contrastive} and Procgen~\citep{mazoure2020deep,stooke2021decoupling,raileanu2021decoupling,mazoure2021cross}. Noteworthy instances of theoretically-motivated representation learning methods for RL include heuristic-guided learning ~\citep{sun2018truncated,mazoure2021improving,cheng2021heuristic}, and random Fourier features~\citep{nachum2021provable}.

\paragraph{Offline reinforcement learning} When learning from a static dataset, agents should balance interpolation and extrapolation errors, while ensuring proper diversity of actions (i.e. prevent collapse to most frequent action in the data). Popular offline RL algorithms such as BCQ~\citep{fujimoto2019off},  MBS~\citep{liu2020provably}, and CQL~\citep{kumar2020conservative} rely on a behavior regularization loss~\citep{wu2019behavior} as a tool to control the extrapolation error. Some methods, such as F-BRC~\citep{kostrikov2021offline} are defined only for continuous action spaces while others, such as MOReL~\citep{kidambi2020morel} estimate a pessimistic transition model. The major issue with current offline RL algorithms such as CQL is that they are perhaps overly pessimistic for generalization purposes, i.e. CQL and MBS ensure that the policy improvement is well-supported by the batch of data. As we will show in our empirical comparisons, using overly conservative policy updates can prevent the representation from fully leveraging the information of the training dataset.

\section{Problem setting}

\subsection{Partially-observable Markov decision processes}
A (infinite-horizon) partially-observable Markov decision process~\citep[POMDP,][]{murphy2000survey} $M$ is defined by the tuple $M=\langle \cS,p_0,\cA,p_{\cS},\cO,p_{\cO}, r, \gamma \rangle$, where $\cS$ is a state space, $p_0=\mathbb{P}[s_0]$ is the starting state distribution, $\cA$ is an action space, $p_{\cS}=\mathbb{P}[\cdot|s_t,a_t]:\cS\times \cA \to \Delta(\cS)$ is a transition function, $\cO$ is an observation space, $p_{\cO}=\mathbb{P}[\cdot|s_t]:\cS \to \Delta(\cO)$\footnote{$\Delta(\cX)$ denotes the entire set of distributions over the space $\cX$.} is an observation function, $r:\cS\times \cA \to [r_\text{min},r_\text{max}]$ is a reward function and $\gamma\in [0,1)$ is a discount factor. The system starts in one of the initial states $s_0\sim p_0$ with observation $o_0\sim p_{\cO}(\cdot | s_0)$. At every timestep $t=1,2,3,..$, the agent, parameterized by a policy $\pi:\cO \to \Delta(\cA)$, samples an action $a_t\sim \pi(\cdot|o_t)$. The environment transitions into a next state $s_{t+1} \sim p_{\cS}(\cdot|s_t,a_t)$ and emits a reward $r_t=r(s_t,a_t)$ along with a next observation $o_{t+1}\sim p_{\cO}(\cdot|s_{t+1})$.

The goal of an RL  agent is to maximize the cumulative rewards $\sum_{t=0}^\infty \gamma^t r_t$ obtained over the entire episode. Value-based off-policy RL algorithms achieve this by estimating the state-action value function under a target policy $\pi$:
\begin{equation}
    Q^\pi(s_t,a_t)=\mathbb{E}_{\mathbb{P}_t^\pi}[\sum_{k=1}^{\infty}\gamma^kr(s_{t+k},a_{t+k})|s_t,a_t],\; \forall s_t\in \cS,a_t\in \cA
\end{equation}
where $\mathbb{P}^\pi_{t}$ denotes the joint distribution of $\{s_{t+k},a_{t+k}\}_{k=1}^\infty$ obtained by executing $\pi$ in the environment.


An important distinction from online RL is that, instead of sampling access to the environment, we assume access to a historical dataset $\cD^\mu$ collected by logging experience of the policy, $\mu$, in the form $\{o_{i,t},a_{i,t},r_{i,t}\}_{i=1,t=1}^{i=N,t=T}$ where, for practical purposes, the episode is truncated at $T$ timesteps. Furthermore, we assume that the agent can only be trained on a limited collection of POMDPs $\cM_\text{train}=\{M_i\}_{i=1}^{m}$, and its performance is evaluated on the set of test POMDPs $\cM_\text{test}$. We assume that both $\cM_\text{train}$ and $\cM_\text{test}$ were sampled from a common task distribution and that every POMDP $M_i\in \cM=\cM_\text{train} \cup \cM_\text{test}$ shares the same transition dynamics and reward function with $\cM$ but has a different observation function $p_{i,\cO}$. {Importantly, since we perform control from pixels, we are in the POMDP setting~\citep[see ][]{yarats2019improving} and therefore emphasize the difference between observations $o_t$ and corresponding states $s_t$ throughout the paper.}

\subsection{Representation learning}
Previous works in the RL literature have studied the use of auxiliary signals to improve generalization performance. Among others, \cite{liu2020cross,agarwal2021contrastive} define the similarity of two observations to depend on the distance between action sequences rolled out from that observation under their respective optimal policies. They achieve this by finding a latent space $\cZ\subseteq \cS$ in which the distance $d_\cZ(z,z')$ for all $z,z'\in\cZ$ is equivalent to distance between true latent states $d_\cS(s,s')$ for all $s,s'\in \cS$; the aforementionned works learn $\cZ$ by optimizing action-based similarities between observations.

In practice, latent space $z$ is decoded from observation $o$ using a latent state decoder $f:\cO\to \cZ$ from observation $o_t$. 
Through the paper, we assume that all value functions have a linear form in the latent decoded state, i.e. $Q_\theta(o,a)=\theta_{a}^\top f_{\psi}(o)=\theta_a^\top z_\psi$, which agrees with our practical implementation of all algorithms. Within this model family, the ability of an RL agent to correctly decode latent states from unseen observations directly affects its policy, and therefore, its generalization capabilities. 
In the next section, we discuss why representation learning is important for offline RL, and how existing action-based similarity metrics fail to recover the latent states for important sets of POMDPs.

\section{Motivating example}
\vspace{-1em}
\begin{figure}[h!]
    \centering
    \includegraphics[width=0.9\linewidth]{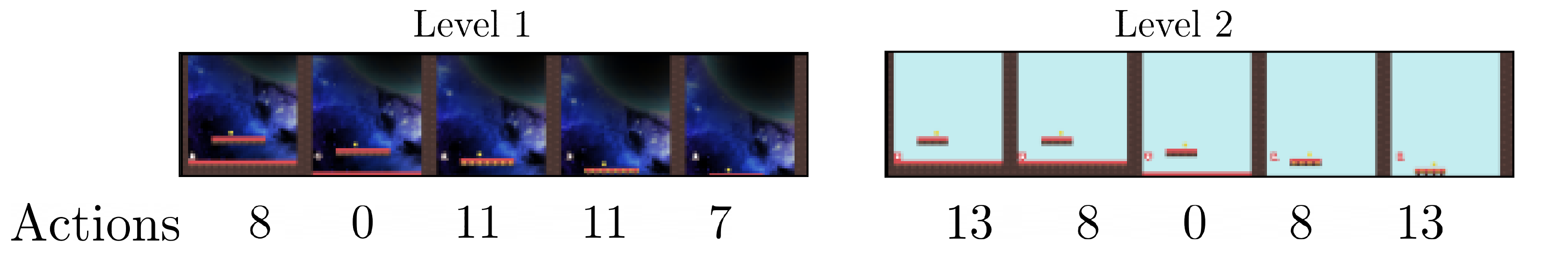}
    \caption{Two levels of the Climber game from the Procgen benchmark~\citep{cobbe2020leveraging} with \emph{near-identical} latent states and value functions but \emph{drastically different} action sequences. }
    \label{fig:toy}
\end{figure}

Multiple recently proposed self-supervised objectives~\citep{liu2020cross,agarwal2021contrastive} conjecture that observations $o_1\in M_1,o_2\in M_2$ that emit similar future action sequences under optimal policies $\pi^*_1,\pi^*_2$ should be decoded into nearby latent states $z_1,z_2$. While this heuristic can correctly group observations with respect to their true latent state in simple action spaces, it fails to identify similar pairs of trajectories in POMDPs with multiple optimal policies. For instance, two trajectories might visit an identical set of latent states, but have drastically different actions.

Fig.~\ref{fig:toy} shows one such example: two levels of the Climber game have a near-identical true latent state (see Appendix) and value function (average normalized mean-squared error of 0.0398 across episode), while having very different action sequences from a same PPO policy (average total variation distance of 0.4423 across episode). The problem is especially acute in Procgen, since the PPO policy is high-entropy for some environments (see Fig.~\ref{fig:entropy}), i.e. various levels can have multiple drastically different near-optimal policies, and hence fail to properly capture observation similarities.



In this scenario, assigning observations to a similar latent state by value function similarity would yield a better state representation than reasoning about action similarities. In a POMDP with a different structure, grouping representations by action sequences can be optimal. So how do we unify these similarity metrics under a single framework?

In the next section, we use this insight to design a general way of improve representation learning through {self-supervised} learning of discounted future behavior.

\section{Method}
We propose to measure a generalized notion of future behavior similarity using generalized value functions, as defined by the corresponding cumulant function. The choice of cumulant determines which components of the future trajectory are most relevant for generalization.


\subsection{Quantifying future behavior with GVFs}
An RL agent's future discounted behavior can be quantified not only by its the value function, but other auxiliary signals, for example, by its observation occupancy measure, known as successor features~\citep{dayan1993improving,barreto2016successor}. The choice of the examined signal quantifies the properties the agent will exhibit in the future, such as accumulated returns, or observation visitation density. See Thm.~\ref{thm:sf_bound} for the connection between successor features and interpolation error in our method.

Following the work of~\cite{sutton2011horde}, we can broaden the class of value functions to any kind of cumulative discounted signal, as defined by a bounded cumulant function $c:\cO\times \cA \to \mathbb{R}^d$, s.t. $|c(o,a)|\leq c_\text{max}$ for $c_\text{max}=\sup_{o,a\in \cO\times \cA}c(o,a)$. While typically cumulants are scalar-valued functions (e.g. reward), we also make use of the vector-valued case for learning the successor features~\citep{barreto2016successor}{, in which case the norm of $c(o,a)$ is bounded}.

\begin{definition}[Generalized value function]
Let $c$ be any bounded function over $\mathbb{R}^d$, let $\gamma \in [0,1]$ and $\mu$ any policy. The generalized value function is defined as
\begin{equation}
    G^\mu(o_t)=\mathbb{E}_{\mathbb{P}_t^\mu}[\sum_{k=1}^\infty \gamma^kc(o_{t+k},a_{t+k})|o_t]
\end{equation}
for any timestep $t\geq 1$ and $o_t\in \cO$.
\end{definition}

Since, in our case, we can learn $G^\mu$ for each distinct {POMDP} $M_i$ for the dataset $\cD^\mu$, we index the GVF using the {POMDP} index, i.e. $G^\mu_i=\texttt{LearnGVF}(c,\cD^\mu,i)$\footnote{In practice, the learning is parallelized}.

\begin{algorithm}[ht]
\SetAlgoLined
\SetKwInOut{Input}{Input}
\SetKwInOut{Output}{Output}
 \Input{Cumulant function $c$, dataset $\cD^\mu$, {POMDP} label $i$, initial parameters $\theta^{(0)}$, target parameters $\tilde{\theta}$, latent state decoder $f$, iterations $J$, learning rate $\alpha$, discount $\gamma$}{

\For{$j=1,..,J$}{
$o,a,o'\sim \cD[i]$; {\tcp{Sample transition from POMDP $i$}}
$c\leftarrow c(o,a)$\;
$o\leftarrow \text{random crop}(o)$\;
$z,z'\leftarrow f(o),f(o')$\;
$\theta^{(j)}\leftarrow \theta^{(j-1)} - \alpha \nabla_{\theta^{(j-1)}} (G_{\theta^{(j-1)}}(z)-c-\gamma G_{\tilde{\theta}^{(j-1)}}(z'))^2$ \;
Update target parameters $\tilde{\theta}$ with $\beta$ of online parameters $\theta$\;
}

}
\caption{\texttt{LearnGVF}($c,\cD^\mu,i,\theta^{(0)},J,\alpha,\gamma$): Offline estimation of GVF $\hat{G}^\mu_i$}
\label{alg:eval_algo}
\end{algorithm}

\subsection{Measuring distances between GVFs of different POMDPs}
Examining the difference between future behaviors of two observations quantifies the exact amount of expected behavior change between these two observations. Using the GVF framework, we could compute the distance between $o_1\in M_1$ and $o_2\in M_2$ by first estimating the latent state with $z=f(o)$ using a (learned) latent state decoder $f$, and then evaluating the distance
\begin{equation}
    d_\mu(o_1{^i},o_2{^j})=|G^\mu_i(f(o_1))-G^\mu_j(f(o_2))|\;\;{i,j=1,2,..},
\end{equation}

a measure of dissimilarity that can then be used in a contrastive loss.


However, the distance between GVFs from two different {POMDPs} can have drastically different scales: $|G^\mu_1(o_1)-G^\mu_2(o_2)|\leq \frac{c_{1,\text{max}}^\mu+c_{2,\text{max}}^\mu}{1-\gamma}$, making point-wise comparison meaningless. The issue is less acute for cumulants which induce a unnormalized density estimate (e.g. indicator functions for successor representation), and more problematic when the cumulant incorporates the extrinsic reward function. To avoid this problem, we suggest performing a comparison based on order statistics.

A robust distance estimate between GVF signals across POMDPs can be obtained by looking at the cumulative distribution function of $G_i$ denoted $F_i(g)=\mathbb{P}[G_i(o_t)\leq g]$ for all $o_t\in \cO$. $G_i$ is a deterministic GVF with the set of discontinuity points of measure 0, and as such $F_i$ can be understood through the induced state distribution $\mathbb{P}_t^\mu$ (using continuous mapping theorem from~\cite{mann1943stochastic}). It can be estimated from $n$ independent, identically distributed samples of $\cD^\mu$ as
\begin{equation}
    \hat{F}_i(g)=\frac{1}{n}\sum_{i=1}^n\mathbbm{1}_{G_i<g}, {G_i=\texttt{LearnGVF}(c,\cD^\mu,i),}\; \; g\in \bigg[-\frac{c_{i,\text{max}}}{1-\gamma},\frac{c_{i,\text{max}}}{1-\gamma}\bigg]
\end{equation}

and its inverse, the empirical quantile function~\citep{awvd1998asymptotic}
\begin{equation}
    \hat{F}_i^{-1}(p)=\inf \{g \in \bigg[-\frac{c^\mu_{i,\text{max}}}{1-\gamma},\frac{c^\mu_{i,\text{max}}}{1-\gamma}\bigg]: p\leq F_i(g)\}, \; p \in [0,1]
\end{equation}

We use the empirical quantile function to partition the range of all GVFs into $K$ quantile bins, i.e. disjoint sets with identical size where the set corresponding to quantile $k$ is defined as $I_i(k)=\{o \in M_i:F^{-1}_i(\frac{k}{K})\leq G^\mu_i(o) \leq F^{-1}_i(\frac{k+1}{K})\}$ and it's aggregated version as $I(k)=\cup_{i=1}^m I_i(k)$. 

Importantly, we augment the dataset $\cD^\mu$ with observation-specific labels, which correspond to the index of the quantile bin into which the GVF $G$ of an observation $o\in M_i$ falls into:
\begin{equation}
    l_{{i}}(o)=\max_k\mathbbm{1}_{o\in I_i(k)}
\end{equation}

These self-supervised labels are then used in a multiclass InfoNCE loss~\citep{oord2018representation}, which is a variation of metric learning with respect to the quantile distance defined above~\citep{khosla2020supervised,song2020multi}.

\subsection{{Self-supervised} learning of \shortmethod{}s} 

After augmenting the offline dataset with observation labels, we use a simple {self-supervised} learning procedure to minimize distance in the latent representation space between observations with identical labels.

First, the observation $o$ is encoded using a non-linear encoder $f_\psi:\cO \to \cZ$ with parameters $\psi$ into a latent state representation $z=f_\psi(o)$\footnote{This encoder is different from the one used to evaluate the GVFs.}. The representation $z$ is then passed into two separate trunks: 1) a linear matrix $\theta_a$ which recovers the state-action value function $Q_\theta(o,a)=\theta_a^\top z$, and 2) a non-linear projection network $h_\theta:\cZ\to\cZ$  with parameters $\theta_h$ to obtain a new embedding, used for contrastive learning.

The projection $h(z)$ is then used used in a multiclass InfoNCE loss~\citep{oord2018representation,song2020multi} where a linear classifier $\mat{W}\in \mathbb{R}^{|\cZ| \times K}$ aims to correctly predict the observation labels (i.e. quantile bins $k=1,2,..,K$) from $h(z)$:

\begin{equation}
    \ell_{\text{NCE}}(\theta_h,\psi,\mat{W})=-\mathbb{E}_{o\sim \cD{^\mu}}\bigg[\sum_{k=1}^K\mathbbm{1}_{l({o})=k}\text{LogSoftmax}[\mat{W}^\top h(f_\psi(o))/\tau]_k\bigg],
\end{equation}
where $\tau>0$ is a temperature parameter.

Our empirical findings suggest that this version of the loss is more stable than other multi-class contrastive losses (see Appendix~\ref{sec:loss_ablation}).

\begin{figure}[h!]
    \centering
    \includegraphics[width=1\linewidth]{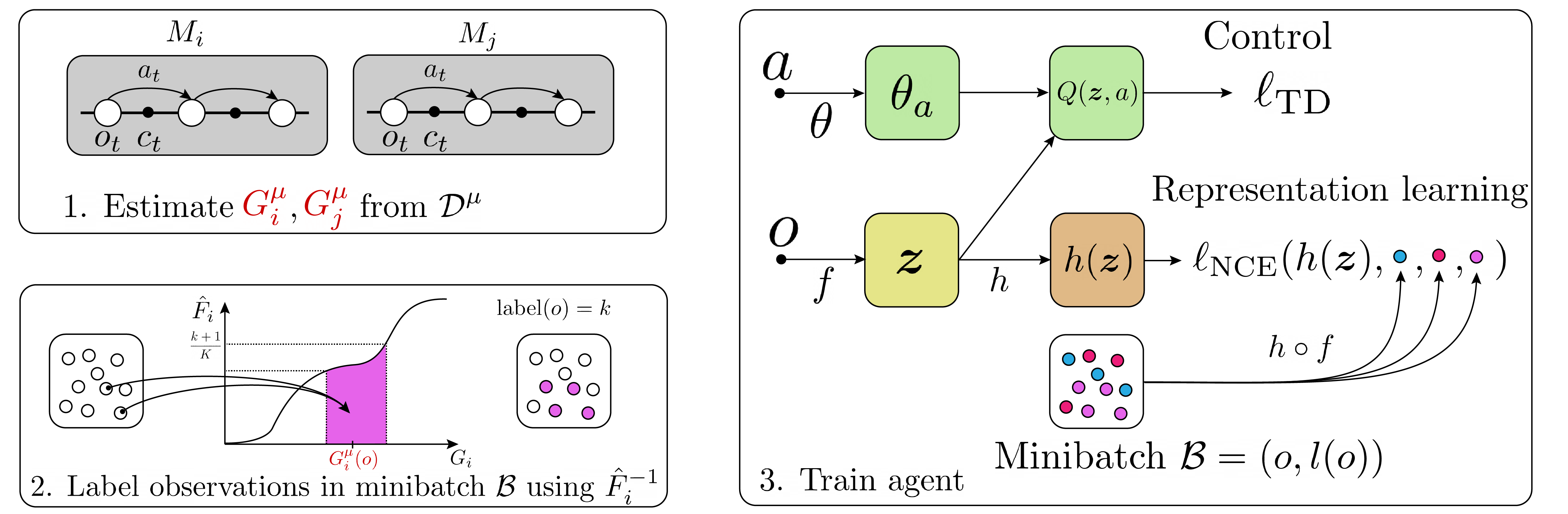}
    \caption{Schematic view of \shortmethod\ : the offline dataset $\cD^\mu$ is used to estimate POMDP-specific GVFs wrt some cumulant function $c$, whose quantiles are then used to label each observation in the dataset. These labels are then used in a multi-class contrastive learning procedure along with offline RL learning. }
    \label{fig:algo_schema}
\end{figure}

\subsection{Algorithm}
Our method relies on the approximation oracle \texttt{LearnGVF}, to produce GVF estimates, later used in the contrastive learning phase. 

Since we are concerned with the offline RL setting, we add our auxiliary loss on top of Conservative Q-learning~\citep[CQL,][]{kumar2020conservative}, a strong baseline. CQL is trained using a linear combination of Q-learning~\citep{watkins1992q,ernst2005tree} and behavior-regularization:
\begin{equation}
    \ell_\text{CQL}(\theta)=\mathbb{E}_{{o},a,r,{o'}\sim\cD^\mu}[(r+\gamma \max_{a'\in \cA}Q_{\tilde{\theta}}({o'},a')-Q_{\theta}({o},a))^2]+\lambda \mathbb{E}_{s\sim \cD^\mu}[LSE(Q_\theta({o},a))-\mathbb{E}_{a\sim\mu}[Q_\theta({o},a)]{]},
\end{equation}
for $\lambda\geq 0$, $\tilde{\theta}$ target network parameters\footnote{{A copy of $\theta$ updated solely using an exponential moving average (see Appendix).}} and $LSE$ being the log-sum-exp operator~\footnote{\url{https://en.wikipedia.org/wiki/LogSumExp}}.

\begin{algorithm}[H]
\SetAlgoLined
\SetKwInOut{Input}{Input}
 \SetKwInOut{Output}{Output}
 \Input{Dataset $\mathcal{D} \sim \mu$, initialized Q-function $Q_{\theta}$ with encoder $f_{\theta_f}$ and action weights $\theta_a$, per-{POMDP} set of GVFs $\cG=\{G^\mu_i\}_{i\in }$, state projection network $h_\psi$, epoch number $J$, number of {POMDPs} $m$, number of quantiles $K$, temperature parameter $\tau$, exponential moving average parameter $\beta$}
 \For{epoch $j=1,2,..,J$}{
 \For{minibatch $\cB\sim \mathcal{D}$}{
    \tcc{Data augmentation on observation}
    $o\leftarrow \text{random crop}(o)$ for all $o\in \cB$\;
    $z\leftarrow f_{\psi}(o)$ for all $o\in \cB$ \;
    \tcc{Update CQL agent}
    Update $\theta_a,\psi$ using $\nabla_{\theta_a,\psi}\ell_\text{CQL}(\theta)$ \;
    \tcc{Compute $\cG$ quantiles}
    \For{{POMDP} $M_i=1,2,..,m$}{
    Estimate $\hat{F}_{i}^{-1}$ of $G^\mu_i$ from $\cB$ \;
    \For{observation $o \in \cB \cap M_i$}{
    $l(o)\leftarrow k$ if $\hat{F}_{i}^{-1}(\frac{k}{K})\leq G^\mu_i(o)\leq \hat{F}_{i}^{-1}(\frac{k+1}{K})$ \;
    }
    }
    \tcc{Update encoder and projection network}
    Update $\theta_h,\psi,\mat{W}$ using $\nabla_{\theta_h,\psi,\mat{W}}\ell_{\text{NCE}}(\theta_h,\psi,\mat{W})$ computed with $z,l(o)$ and $\tau$ \;
    Update CQL agent's target network with $\beta$ of online parameters $\psi,\theta$\;
 }
 }
 \caption{\shortmethod\ : Offline RL with future behavior observation matching}
 \label{alg:offline_algo}
\end{algorithm}

Alg.~\ref{alg:offline_algo} summarizes the learning procedure for \shortmethod\, as implemented on top of a CQL agent for a discrete action space. In our experiments, all baselines use random crops as data augmentation.

\paragraph{Connection to existing methods}
Our framework is able to recover objectives similar to those of prior works by carefully designing the cumulant function.
\begin{itemize}
    \item \textbf{Cross-State Self-Constraint~\citep[CSSC,][]{liu2020cross}}: In CSSC, observations $o_1,o_2$ are considered similar if they have identical future action sequences of length $K$ under some fixed policy; a total of $|\cA|^K$ distinct classes are possible. This approach can be approximated in our framework by picking $c(o_t,a_t)=\mathbbm{1}_{a_t}(a),\; \forall a\in \cA$. The problem reduces to a $|\cA|^{T-t}$-way classification problem for observations of timestep $t$, which \shortmethod{} approximates using $K$ quantiles.
    \item \textbf{Policy similarity embedding~\citep[PSE,][]{agarwal2021contrastive}}: PSEs balance the distance between local optimal behaviors and long-term dependencies in the transitions, notably using $d_\text{TV}$. If we consider the space of Boltzmann policies $\pi_\text{Boltzmann}$ with respect to an {POMDP}-specific value function $Q$, then choosing $c(o_t,a_t)=r(s_t,a_t)$ in \shortmethod\ will effectively compute the distance between unnormalized policies.
\end{itemize}

\subsection{Choice of number of quantiles $K$}
How should the number of quantiles $K$ be set, and what is the effect of smaller/ larger values of $K$ on the observation distance?
Thm.~\ref{thm:gvf_bound} highlights a trade-off when choosing the number of quantiles bins empirically.
\begin{theorem}
Let $G_1$, $G_2$ be generalized value functions with cumulants $c_1,c_2$ from respective POMDPs $M_1,M_2$, $K$ be the number of quantile bins, $n_1,n_2$ the number of sample transitions from each POMDP. Suppose that $\mathbb{P}[\sup_{t=1,2,..}|c_1(o_{1,t},\mu(o_{1,t}))-c_2(o_{2,t},\mu(o_{2,t}))|>\nicefrac{(1-\gamma)\varepsilon}{\gamma}]\leq \delta$. Then, for any $k=1,2,..,K$ and $\varepsilon>0$ the following holds without loss of generality:
\begin{equation*}
    \begin{split}
       \mathbb{P}\bigg[ \sup_{o_1,o_2\in I(k)}|G_1(o_1)-G_2(o_2)| > 3\varepsilon\bigg]
        &\leq 2e^{-2n_1\varepsilon^2/4}+p(n_1,K,\varepsilon)+\delta \\ 
    \end{split}
\end{equation*}
\vspace{-0.5em}
where
\vspace{-0.5em}
\begin{equation*}
    p(n,K,\varepsilon)=\mathbb{P}\bigg[\sup_{k=1,2,..,K}\big|\hat{F}^{-1}_1\big(\nicefrac{k+1}{K}\big)-\hat{F}^{-1}_1\big(\nicefrac{k}{K}\big)\big|>\varepsilon\bigg]
\end{equation*}.
\label{thm:gvf_bound}
\end{theorem}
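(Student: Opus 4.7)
My plan is to isolate three high-probability events matching the three failure-probability terms on the right-hand side, take their intersection via a union bound, and then derive the $3\varepsilon$ inequality deterministically on the resulting good event.

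\textbf{Step 1 (cumulant closeness $\Rightarrow$ GVF closeness).} Let $\mathcal{E}_{\mathrm{cum}}$ be the hypothesized event $\{\sup_t|c_1(o_{1,t},\mu(o_{1,t}))-c_2(o_{2,t},\mu(o_{2,t}))|\le (1-\gamma)\varepsilon/\gamma\}$, so $\Pr(\mathcal{E}_{\mathrm{cum}}^c)\le \delta$ by assumption. On $\mathcal{E}_{\mathrm{cum}}$, summing the GVF definitions term by term with $\sum_{k\ge 1}\gamma^k = \gamma/(1-\gamma)$ yields the pointwise bound $|G_1(o)-G_2(o)|\le \varepsilon$. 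A useful corollary is that the true quantile functions are also $\varepsilon$-close, $|F_1^{-1}(p)-F_2^{-1}(p)|\le \varepsilon$ for all $p\in[0,1]$, since a uniform $\varepsilon$-shift of a random variable shifts its quantile function by at most $\varepsilon$.

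\textbf{Step 2 (empirical quantile control for $M_1$).} Apply Hoeffding's inequality at the fixed level $p=k/K$ to the Bernoulli statistic $\hat F_1(g) = n_1^{-1}\sum_i \mathbbm{1}[G_1(o_i)\le g]$ with threshold $\varepsilon/2$; inverting yields an event $\mathcal{E}_{\mathrm{Hoef}}$ on which $|\hat F_1^{-1}(k/K)-F_1^{-1}(k/K)|\le \varepsilon/2$, with failure probability at most $2e^{-2n_1(\varepsilon/2)^2}=2e^{-2n_1\varepsilon^2/4}$. The bin-width event $\mathcal{E}_{\mathrm{bin}} = \{\sup_k|\hat F_1^{-1}((k+1)/K)-\hat F_1^{-1}(k/K)|\le \varepsilon\}$ fails with probability $p(n_1,K,\varepsilon)$ by definition. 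The ``WLOG'' clause is what lets us write the concentration in $n_1$ alone -- an analogous bound holds with $n_2$, and one picks the sharper of the two.

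\textbf{Step 3 (deterministic triangle inequality).} On $\mathcal{E}_{\mathrm{cum}}\cap \mathcal{E}_{\mathrm{Hoef}}\cap \mathcal{E}_{\mathrm{bin}}$, whose probability is at least $1-\delta-2e^{-2n_1\varepsilon^2/4}-p(n_1,K,\varepsilon)$ by the union bound, decompose for any $o_1\in I_1(k)$ and $o_2\in I_2(k)$:
\begin{equation*}
|G_1(o_1)-G_2(o_2)| \le |G_1(o_1)-\hat F_1^{-1}(k/K)| + |\hat F_1^{-1}(k/K)-G_1(o_2)| + |G_1(o_2)-G_2(o_2)|.
\end{equation*}
The first term is at most the bin width $\varepsilon$ (by $\mathcal{E}_{\mathrm{bin}}$ and the definition of $I_1(k)$); the third is at most $\varepsilon$ (by $\mathcal{E}_{\mathrm{cum}}$). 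For the middle term, I would chain $\hat F_1^{-1}(k/K)\to F_1^{-1}(k/K)\to F_2^{-1}(k/K)\to G_2(o_2)\to G_1(o_2)$, using (in order) $\mathcal{E}_{\mathrm{Hoef}}$, the quantile corollary from Step 1, the definition of $I_2(k)$ (controlled by the symmetric WLOG bound in $M_2$), and $\mathcal{E}_{\mathrm{cum}}$; a careful accounting of the constants collapses this piece to $\varepsilon$, yielding $3\varepsilon$ overall.

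\textbf{Main obstacle.} The principal difficulty is the middle term: although $o_2$ sits in a bin defined by empirical quantiles of a \emph{different} POMDP ($\hat F_2$), the stated bound only charges a single Hoeffding/bin-width cost in $n_1$. Navigating this asymmetry without paying for a second concentration event is precisely what the ``WLOG'' clause and the scheduling of the $\varepsilon/2$ Hoeffding threshold against the $\varepsilon$ quantile-shift and $\varepsilon$ bin-width are designed to achieve; the remaining pieces are routine triangle inequalities and the geometric-series identity.
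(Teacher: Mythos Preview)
Your high-level strategy---isolate three failure events, union bound, then a deterministic inequality on the good event---matches the paper's, but your decomposition is different and this is what creates the constant-accounting trouble you identify as your ``main obstacle.'' The paper uses only a \emph{two}-term split,
\[
|G_1(o_1)-G_2(o_2)| \;\le\; |G_1(o_1)-G_1(o_2)| \;+\; |G_1(o_2)-G_2(o_2)|,
\]
handles the second summand exactly as in your Step~1 (cumulant closeness $\Rightarrow\le\varepsilon$ with failure $\delta$), and bounds the first summand by the \emph{true} $M_1$ bin width $|F_1^{-1}((k{+}1)/K)-F_1^{-1}(k/K)|$. That width is then written as
\[
2\sup_{k'}\bigl|F_1^{-1}(k'/K)-\hat F_1^{-1}(k'/K)\bigr| \;+\; \sup_{k'}\bigl|\hat F_1^{-1}((k'{+}1)/K)-\hat F_1^{-1}(k'/K)\bigr|,
\]
so the first summand contributes $2\varepsilon$ with failure $2e^{-2n_1\varepsilon^2/4}+p(n_1,K,\varepsilon)$, and the total is $3\varepsilon$. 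The concentration tool is the Dvoretzky--Kiefer--Wolfowitz inequality (uniform over levels, transferred to quantiles via the continuous mapping theorem), not a pointwise Hoeffding bound; the exponent is the same but DKW is what absorbs the supremum over $k'$ without a further union bound.

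Your three-term route through $\hat F_1^{-1}(k/K)$ forces the chain $\hat F_1^{-1}\!\to\!F_1^{-1}\!\to\!F_2^{-1}\!\to\!G_2(o_2)\!\to\!G_1(o_2)$, and an honest tally of those links is at least $\varepsilon/2+\varepsilon+(\text{true }M_2\text{ bin width})+\varepsilon$; this does \emph{not} collapse to $\varepsilon$ with only the three stated failure probabilities. The paper avoids this by never touching $F_2$ at all: everything for the bin-width piece is expressed in the $G_1$ scale. To be fair, the step you worry about---why $G_1(o_2)$ lies in the $M_1$ bin when $o_2\in I_2(k)$---is asserted rather than justified in the paper as well; but if you adopt the paper's two-term split and its bin-width decomposition, the rest of your argument (Steps~1--2 and the union bound) goes through and the constants close.
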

\vspace{-1em}
The proof can be found in the Appendix Sec.~\ref{sec:proofs}. For POMDP $M_1$, the error decreases monotonically with increasing bin number $K$ (second term) but the variance of bin labels depends on the number of sample transitions $n_1$ (first term). The inter-POMDP error (third term) does not affect the bin assignment. Hence, choosing a large $K$ will amount to pairing states by rankings, but results in high variance, as orderings are estimated from data and each bin will have $n=1$. Setting $K$ too small will group together unrelated observations, inducing high bias.

\section{Experiments}

Unlike for single task offline RL~\citep{fu2020d4rl}, most works on zero-shot generalization from offline data either come up with an \emph{ad hoc} solution suiting their needs, e.g.~\citep{ball2021augmented}, or assess performance on benchmarks that do not evalute generalization across observation functions~\citep[e.g.,][]{yu2020meta}. To accelerate progress in this field, we devised the offline Procgen benchmark, an offline RL dataset to directly test for generalization of offline RL agents across observation functions.

\paragraph{Offline Procgen benchmark}
We evaluate the proposed approach on an offline version of the Procgen benchmark~\citep{cobbe2020leveraging}, which is widely used to evaluate zero-shot generalization across complex visual perturbations. Given a random seed, Procgen allows to sample procedurally generated level configurations for 16 games under various complexity modes: ``easy'', ``hard'' and ``exploration''. The dataset is obtained as follows: we first pre-train a PPO~\citep{schulman2017proximal} agent for 25M timesteps on 200 levels of ``easy'' distribution for each environment\footnote{We use the TFAgents' implementation~\citep{TFAgents}} (``easy'' mode is widely used to test generalization capabilities~\citep{cobbe2020leveraging,raileanu2021decoupling,mazoure2021cross}). All agents use the IMPALA encoder architecture~\citep{espeholt2018impala}, which has enough parameters to allow better generalization performance,  compared to other models~\citep[e.g.,][]{mnih2015human}.

\paragraph{Results}
We compare the zero-shot performance on the entire distribution of "easy" {POMDPs} for \shortmethod\ against that of strong RL and representation learning baselines: behavioral cloning (BC) - to assess the quality of the PPO policy, CQL~\citep{kumar2020conservative} - the current state-of-the-art on multiple offline benchmarks which balances RL and BC objectives, {CURL~\citep{srinivas2020curl}, CTRL~\citep{mazoure2021cross}}, DeepMDP~\citep{gelada2019deepmdp} - which {learns a metric closely related to bisimulation} across the MDP, Value Prediction Network~\citep[VPN,][]{oh2017value} - which combines model-free and model-based learning of values, observations, next observations, rewards and discounts, Cross-State Self-Constraint~\citep[CSSC,][]{liu2020cross} - which boosts similarity of observations with identical action sequences, as well as Policy Similarity Embeddings~\citep{agarwal2020pc}, which groups observation representations based on distance in optimal policy space.

\begin{figure}[h!]
    \centering
    \includegraphics[width=\linewidth]{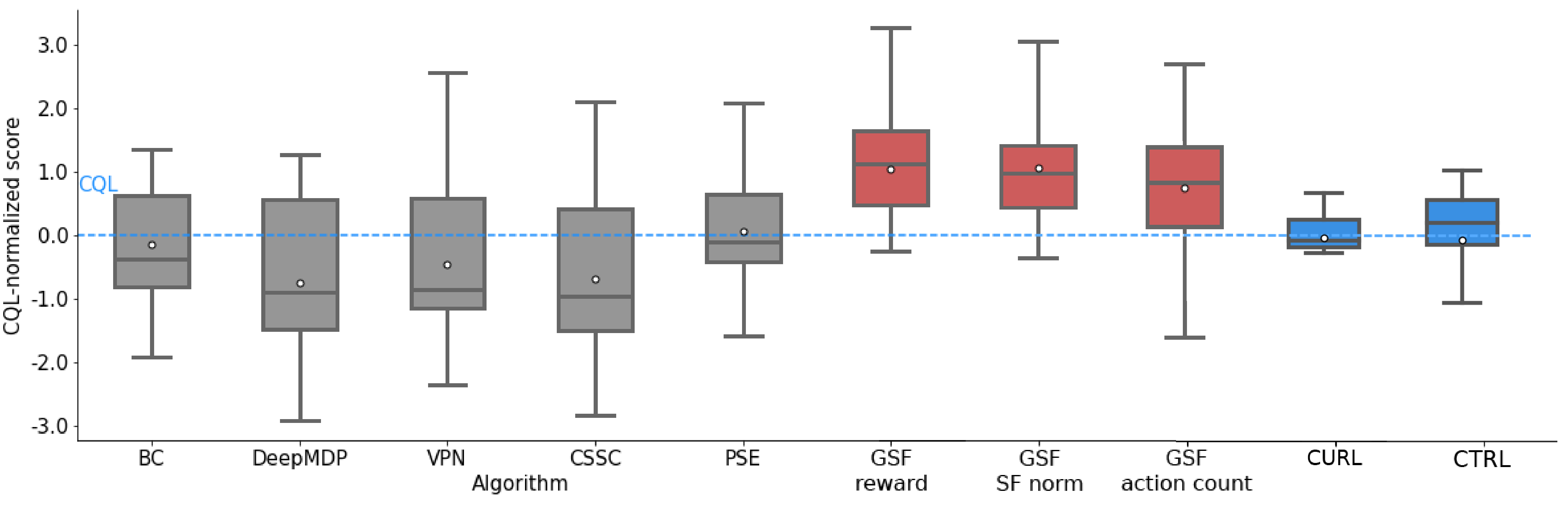}
    \caption{Returns on the offline Procgen benchmark~\citep{cobbe2020leveraging} after 1M training steps. Boxplots are constructed over 5 random seeds and all 16 games; each method is normalized by the per-game median CQL performance. White dots represent average of distribution.}
    \label{fig:procgen_boxplots_1}
\end{figure}

Fig.~\ref{fig:procgen_boxplots_1} shows the performance of all methods over 5 random seeds and all 16 games on the offline Procgen benchmark after 1 million training steps. Per-game average scores for all methods can be found in Tab.~\ref{tab:procgen_per_game} (Appendix). The scores are standardized per-game using the downstream task's (offline RL) performance, in this case implemented by CQL. It can be seen that \shortmethod\ performs better than other offline RL and representation learning baselines. 

Using different cumulants functions can lead to different label assignments and hence different similarity groups.
Fig.~\ref{fig:procgen_boxplots_1} examines the performance of \shortmethod{} with respect to 3 cumulants: 1) $r(s_t,a_t)$, rewards s.t. \shortmethod\ learns the {policy's} $Q^\mu$-value, 2) $\mathbbm{1}_{o_t}(o)$, the successor representation\footnote{In the continuous observation space, we learn a $d$-dimensional successor feature vector $z_{\psi}$ via TD and computing the quantiles over $||z_\psi||_1$.}~\citep{dayan1993improving,barreto2016successor} s.t. \shortmethod\ learns induced distribution over $\cD^\mu$~\citep{machado2020count} and 3) $\mathbbm{1}_{a_t}(a)$, action counts, s.t. \shortmethod\ learns discounted policy. While rewards and successor feature cumulant choices leads to similar performance, using action-based distance leads to larger variance.

\section{Discussion}
In this work we proposed \shortmethod{}, a novel algorithm which combines reinforcement learning with representation learning to improve zero-shot generalization performance on challenging, pixel-based control tasks. \shortmethod{} relies on computing the similarity between observation pairs with respect to any instantaneous accumulated signal, which leads to improved empirical performance on the newly introduced offline Procgen benchmark. Theoretical results suggest that \shortmethod\ 's hyperparameter choice depends on a trade-off between finite sample approximation and extrapolation error.

While our work answered some questions regarding zero-shot generalization in offline RL, some questions persist: can GVF-based distances be included in a contrastive objective without the need for quantile discretization (perhaps through re-scaling or order statistics)? Can the cumulant function be chosen \emph{a priori} for a specific task structure other than Procgen, and shown to lead to optimal representations? 



\newpage

\section*{Appendix}
\subsection{Experimental details}
\label{sec:experiment_details}
\begin{table}[h]
    \centering
    \begin{tabular}{c|l|c}
        Name & Description & Value \\
        \midrule 
       $n_\text{MDPs}$ & Number of training {POMDPs} in the dataset & 200\\
       $\gamma$ & Discount factor & 0.99\\
       Batch size & Batch size & 1024\\
       Hidden layers & Size of $Q$ critic post-encoder layers & $256\times 256 \times |\cA|$\\
       {$\lambda$} & {CQL regularization coefficient} & {1}\\
       {Data augmentation} & {Type of data augmentation use in all methods} & {Random crop}\\
       {$\alpha$} & {Target soft update rate} &  {0.005}\\
       \midrule \\
       Hidden layers & Size of $G$ network post-encoder layers & $256\times 256 \times |\cA|*n_\text{MDPs}$\\
       $G$ pretraining steps & Number of iterations to pretrain GVF estimator for & 100,000\\
       $K$ & Number of quantiles & 7\\
       $\tau$ & Softmax temperature & 0.5
    \end{tabular}
    \caption{Experiments' parameters}
    \label{tab:appendix_experiment_params}
\end{table}

For improved time efficiency, we jointly train the estimators $G_1,G_2,..,G_m$ by first decoding each observation $o\in M_i$ into a latent representation $z=f_{\psi'}(o)$ and then passing the representation through a non-linear network $h':\cZ\to \mathbb{R}^{d_c\times m}$ where $d_c$ is the dimensionality of the cumulant function's output. The output of $h'$ is then split into $m$ disjoint chunks, with chunk $i$ used in the temporal difference loss $\ell_\text{TD}$ with the reward replaced by the cumulant function. However, due to different scales of cumulants across {POMDPs}, we use the PopArt re-normalization scheme~\citep{hessel2019multi} to prevent gradient instability. All GVFs are trained for $50k$ iterations using $\ell_\text{TD}$.

{Data augmentation was taken to be solely random crops, since this type of data augmentation was shown to be sufficient to improve performance of deep RL agents in pixel-based control tasks~\citep{kostrikov2020image}. To do so, we first symmetrically padded the $64\times 64 \times 3$ observation up to $68\times 68 \times 3$, and applied a $64\times 64$ random crop on the resulting tensor.}

All baselines' code was taken from their respective repositories and adapted into our codebase (except DeepMDP which had to be implemented from third-party sources).

{\paragraph{Hyperparameter search:} We allowed each method to tune one hyperparameter due to computational budget constraints. For CQL, $\lambda$ was tuned and found to be 1. For GSFs, we tuned $K$, the number of quantiles. For all other methods, we tuned the auxiliary loss coefficient(s). Performance of GSFs vs some hyperparameter choices can be seen in Fig.~\ref{fig:ablation_hps}.}

\paragraph{Dataset composition} We first pre-trained PPO~\citep{TFAgents} for 25M frames for all 16 games. Then, we conducted rollouts with the resulting policy under an $\varepsilon_t$-greedy strategy; $\varepsilon_t$ was allowed to decay according to the rule
$$
\varepsilon_t = 0.1 - 3.96\times 10^{-9} t,
$$
which has two endpoints: $\varepsilon_0=0.1$ and $\varepsilon_{25M}=0$. This was done in order to prevent collapse of the Q-values due to the log-sum-exp term in CQL to very low negative values (if action coverage is not sufficient in the dataset).

{\paragraph{Target updates:} As is common in multiple deep RL algorithms~\citep{kostrikov2020image,kostrikov2021offline}, the bootstrap target estimates are typically computed using a value network with parameters $\tilde{\theta}$, which are in turn periodically updated using an exponential moving average of historical values with the current online network parameters $\theta$ with update rate $\alpha$:
\begin{equation}
    \tilde{\theta}=\alpha \theta+(1-\alpha)\tilde{\theta}
\end{equation}
}

\subsection{Proofs}
\label{sec:proofs}

\begin{proof}[Thm.~\ref{thm:gvf_bound}]
First, consider some arbitrary quantile bin $k=1,2,..,K$.

\begin{equation}
    \begin{split}
        \sup_{o_1,o_2\in I(k)}|{G}_1^\mu(o_1)-{G}_2^\mu(o_2)| 
        &= \sup_{o_1,o_2\in I(k)}|G_1^\mu(o_1)-G_1^\mu(o_2)+G_1^\mu(o_2)-G_2^\mu(o_2)| \\ 
        &\leq \sup_{o_1,o_2\in I(k)}|G_1^\mu(o_1-o_2)|+|G_1^\mu(o_2)-G_2^\mu(o_2)|
    \end{split}
\end{equation}

Since the cumulant is, in practice, estimated from $\cD^\mu$, it follows that $c(s,a)\in [-c^\mu_\text{i,max},c^\mu_\text{i,max}]\subseteq [-c_\text{max},c_\text{max}]$ for all $s,a \in \cS,\cA$. Since the disparity between cumulants for POMDP $M_1,M_2$ comes from the uneven coverage by $\mu$, we can denote this as $\delta_{1,2}(t)=|c(o_{1,t},\mu(o_{1,t}))-c(o_{2,t},\mu(o_{2,t}))|$.

Suppose that $\mathbb{P}[\sup_{k=1,2,..}\delta_{1,2}(t+k)>\varepsilon]\leq p^\mu(c_1,c_2,\varepsilon)$. Then,
\begin{equation}
    \begin{split}
        \mathbb{P}[|G_1^\mu(o_t)-G_2^\mu(o_t)|>\varepsilon]&=\mathbb{P}[\mathbb{E}_{\mathbb{P}^\mu_t}[\sum_{k=1}^\infty \gamma^k|c^\mu_{i}(o_{t+k},a_{t+k})-c^\mu_{j}(o_{t+k},a_{t+k})||o_t]>\varepsilon]\\
        &=\mathbb{P}[\mathbb{E}_{\mathbb{P}^\mu_t}[\sum_{k=1}^\infty \gamma^k\delta_{1,2}(t+k)|o_t]>\varepsilon]\\
        &\leq \mathbb{P}[\sup_{k=1,2,..,K}\delta_{1,2}(t+k)\mathbb{E}_{\mathbb{P}^\mu_t}[\sum_{k=1}^\infty \gamma^k|o_t]>\varepsilon]\\
        &\leq \mathbb{P}[\sup_{k=1,2,..,K}\delta_{1,2}(t+k)>\frac{(1-\gamma)\varepsilon}{\gamma}]\\
        &\leq p^\mu(c_1,c_2,\nicefrac{(1-\gamma)\varepsilon}{\gamma})
    \end{split}
\end{equation}
since $M_1,M_2$ share the same induced distribution $\mathbb{P}^\mu_t$. Due to stationarity, we can drop the time index.

Now, the first term can be decomposed with the empirical distribution function $\hat{F}^{-1}_{i}$ estimated from $n_i$ samples of POMDP $M_i$:

\begin{equation}
    \begin{split}
        \bigg|F^{-1}_{1}\bigg(\frac{k+1}{K}\bigg)-F^{-1}_1\bigg(\frac{k}{K}\bigg)\bigg|
        &\leq \sup_{k'=1,2,..,K}\bigg|F^{-1}_{1}\bigg(\frac{k'+1}{K}\bigg)-F^{-1}_1\bigg(\frac{k'}{K}\bigg)\bigg|\\
        &\leq \sup_{k'=1,2,..,K}\Delta(F_1,n_1,\frac{k'+1}{K})+\Delta(F_1,n_1,\frac{k'}{K})+\Delta(F_1,n_1,\frac{k'+1}{K},\frac{k'}{K})\\
        &\leq 2\sup_{k'=1,2,..,K}\Delta(F_1,n_1,\frac{k'}{K})+\sup_{k'=1,2,..,K}\Delta(F_1,n_1,\frac{k'+1}{K},\frac{k'}{K})\\
    \end{split}
    \label{eq:pre_prob_thm2}
\end{equation}
where 
$$
\Delta(F,n,p)=\big|F^{-1}\big(p\big)-\hat{F}^{-1}\big(p\big)\big|
$$

and 

$$
\Delta(F,n,p_1,p_2)=\big|\hat{F}^{-1}\big(p_1\big)-\hat{F}^{-1}\big(p_2\big)\big|,
$$
and dependence of $F$ on $n$ is implicit.

We now use the union bound to observe the fact that $\mathbb{P}[\sum_{i=1}^n|X_i|>n\varepsilon]\leq \sum_{i=1}^n\mathbb{P}[|X_i|>\varepsilon]$ for $X_1,..,X_n$ real-valued random variables and $\varepsilon>0$.

Using this fact, and that events listed in Eq.~\ref{eq:pre_prob_thm2} form an increasing sequence of supersets
\begin{equation}
    \begin{split}
        \mathbb{P}\bigg[\bigg|F^{-1}_{1}\bigg(\frac{k+1}{K}\bigg)-F^{-1}_1\bigg(\frac{k}{K}\bigg)\bigg|>2\varepsilon\bigg] &\leq
        \mathbb{P}\bigg[\sup_{k'\in [0,1]}\Delta(F_1,n_1,\frac{k'}{K})>\frac{\varepsilon}{2} \bigg]+\mathbb{P}\bigg[ \sup_{k'=1,2,..,K}\Delta(F_1,n_1,\frac{k'+1}{K},\frac{k'}{K}) > \varepsilon \bigg]\\
        &\leq 2e^{-2n_1\varepsilon^2/4}+p(n_1,K,\varepsilon)
    \end{split}
\end{equation}

Here, we used the known results of convergence of the empirical distribution function $\hat{F}$ to the true distribution function $F$ as $n\to \infty$~\citep{dvoretzky1956asymptotic}. Using the continuous mapping theorem~\citep{mann1943stochastic} for a transformation with a set of discontinuities of measure 0, we transposed this result onto the empirical quantile function $F^{-1}$. 

Since the error is monotonic in $n$, we symmetrize the bound by replacing $n_1$ by $\min(n_1,n_2)$, so that both $M_1$ and $M_2$ can be interchanged.
\end{proof}

\begin{theorem}
Let $f$ be an encoder such that $\nabla \ell_{NCE}(\theta_h,\psi,\mat{W})\approx 0$. If $c(o,a)=\mathbbm{1}_{o_t}(o)$, then $I(K)$ is the set which has the smallest interpolation error over $f$ and $I(1)$ has the largest interpolation error over $f$.
\label{thm:sf_bound}
\end{theorem}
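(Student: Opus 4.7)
The plan is to exploit two specific facts: with $c(o,a)=\mathbbm{1}_{o_t}(o)$, the GVF $G^\mu$ coincides with the successor representation (SR), and the scalar summary $\|G^\mu(o)\|_1$ measures the total discounted future occupancy density reachable from $o$ under $\mu$; moreover, the condition $\nabla \ell_{\mathrm{NCE}}(\theta_h,\psi,\mat{W})\approx 0$ means $f$ has converged at a stationary point of the multi-class InfoNCE objective, so it collapses same-labelled observations to nearby points in projection space $h\circ f$, and (up to the Lipschitz constants of $h$ and the temperature $\tau$) in representation space itself.

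First I would fix a working definition of the interpolation error of $f$ on a set $I\subseteq \cO$, for instance the within-set dispersion $\mathrm{IntErr}_f(I)=\mathbb{E}_{o\in I}\|f(o)-\mathbb{E}_{o'\in I}f(o')\|^2$, or equivalently the supremum pairwise distance. The first step is to show that at an NCE stationary point, the attained within-bin compactness of $f$ is controlled by the effective number of positive pairs the bin contributes to $\ell_{\mathrm{NCE}}$: more positive pairs yield more contrastive constraints, forcing tighter clustering within that bin.

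Second, I would translate the successor-feature quantile construction into a density statement about the dataset. Observations $o$ with large $\|G^\mu(o)\|_1$ lie in regions where the discounted occupancy measure $d^\mu$ concentrates, and are therefore precisely the observations most frequently revisited along $\mu$-trajectories; hence $\cD^\mu$ draws many more (approximate) copies of them and produces many more within-bin positive pairs in $I(K)$. Observations in $I(1)$ live in low-occupancy regions and so contribute correspondingly few positive pairs. Plugging this ordering into the NCE-controlled dispersion estimate from the first step yields $\mathrm{IntErr}_f(I(K))\leq \mathrm{IntErr}_f(I(k))\leq \mathrm{IntErr}_f(I(1))$ for every $k$, establishing the claim.

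The main obstacle will be the first step: making quantitatively precise how the number of positive pairs attached to a quantile bin translates into an upper bound on within-bin dispersion of $f$ at an approximate NCE stationary point. This typically requires either a mild smoothness/Lipschitz assumption on $f$ and $h$, or a bin-wise sample-complexity analysis of the InfoNCE loss. A secondary subtlety is that quantile bins have (approximately) equal empirical mass by construction, so the density that matters here is not the bin count but the geometric/dynamical density inherited from the SR, which has to be carried through carefully when linking $\|G^\mu(o)\|_1$ back to the local density of positive-pair candidates under $\mathcal{D}^\mu$.
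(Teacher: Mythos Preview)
Your proposal takes a route that is not the paper's and, more importantly, contains a structural gap you yourself flag but do not close.

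The paper's argument is short and hinges on a result you do not invoke: the count--successor-representation inequality of \citet{machado2020count},
\[
\frac{\gamma}{n(o)+1}+1+\gamma \;\le\; \|G(o)\|_1 \;\le\; \frac{\gamma}{n(o)+1}+\frac{\gamma^2}{1-\gamma}+1+\gamma,
\]
where $n(o)$ is the visitation count of $o$ under $\mu$. From this the paper reads off that the quantile bin of $\|G(o)\|_1$ is a monotone proxy for $n(o)$, and it identifies \emph{interpolation error} with data-coverage error: the bin containing the most-visited observations has the smallest interpolation error because those are the points where $f$ is trained on the most data. The NCE stationarity condition is used only to say that same-label observations are mapped to nearby representations, so that the quantile label is actually reflected in $f$. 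There is no appeal to ``number of positive pairs per bin controls tightness of clustering.''

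Your plan instead defines interpolation error as within-bin dispersion of $f$ and tries to order the bins by how many positive pairs each contributes to $\ell_{\mathrm{NCE}}$. This runs directly into the subtlety you note: quantile bins are \emph{equal-mass by construction}, so the expected number of positive pairs per bin is the same across $k$. Your proposed workaround --- appealing to ``geometric/dynamical density inherited from the SR'' --- is exactly the content of the Machado inequality, but you never bring that inequality in, and without it there is no mechanism in your outline that distinguishes $I(1)$ from $I(K)$. In short, the missing idea is the explicit link $\|G(o)\|_1 \leftrightarrow n(o)$ from \citet{machado2020count}, together with the paper's reading of ``interpolation error'' as a coverage quantity rather than a dispersion quantity; once you have that, the positive-pair-counting step is unnecessary and the equal-mass obstruction disappears.
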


\begin{proof}[Thm.~\ref{thm:sf_bound}]
If $\nabla \ell_\text{NCE}\approx 0$, then, for all $k=1,2,..,K$
\begin{equation}
    \sup_{o_1,o_2\in I(k)}||G_1(o_1)-G_2(o_2)||<\varepsilon_G\implies \sup_{o_1,o_2\in I(k)}||f(o_1)-f(o_2)||<\varepsilon_f
\end{equation}

for $\varepsilon_G,\varepsilon_f>0$. For the specific choice of cumulant being the state indicator function, the following result is due to~\cite{machado2020count}:
\begin{equation}
    \frac{\gamma}{n_1(o)+1}+1+\gamma\leq ||G(o)||_1\leq\frac{\gamma}{n_1(o)+1}+\frac{\gamma^2}{1-\gamma}+1+\gamma, 
\end{equation}
where $n_i(o)$ is the number of times observation $o$ was visited by policy $\mu$ in POMDP $M_i$ (here, $G$ is a vector-valued function). Therefore,
\begin{equation}
    \sup_{o_1,o_2\in I(k)}||n_1(o_1)-n_2(o_2)||<\varepsilon_G\implies \sup_{o_1,o_2\in I(k)}||f(o_1)-f(o_2)||<\varepsilon_f,
\end{equation}
which means that observations assigned with to similar representations have similar state visitation frequencies under $\mu$. The set $I(K)$ has the observations with highest counts, i.e. lowest interpolation error, while $I(1)$ has the largest interpolation error.
\end{proof}

\subsection{Additional results}

\paragraph{True latent state similarity in Procgen}

The true latent state in Procgen consists of a byte vector describing the entire memory state of the environment and the agent. This vector can be extracted by using the command \texttt{list(env.callmethod("get\_state"))} in Python. To assess the distance between latent state vectors, we computed the negative cosine similarity between them. Fig.~\ref{fig:toy} shows two pairs of trajectories, for which the average cosine similarity between true latent states across timesteps was 0.897.

{\paragraph{Value-function similarity in Procgen}
Fig.~\ref{fig:toy} shows two pairs of trajectories with drastically dissimilar (discrete) action sequences. Computing the distance between them will result in a large quantity which doesn't necessarily decrease with latent state similarity. On the other hand, the values for the first sequence of states are 8.20, 8.05, 8.74, 9.06, 9.29, and 8.35, 8.30, 8.26, 8.55, 8.40 for the second sequence of states. We can see that values are much more similar in this coupling of tasks than actions (with respect to, e.g. $\ell_1$) and, if we group the states by their corresponding value magnitude (or quantiles), the encoder will learn to assign all states which look like the sequences above to a neighboring latent representation, and hence, will allow better zero-shot generalization as it will learn to ignore backgrounds, platform length and small variations in agent’s position. }

\begin{table}[ht]
    \centering
    \resizebox{\linewidth}{!}{%
    \begin{tabular}{l||l|l|l|ll|l}
    \toprule
    &  & RL+Bisim. & RL+value pred. & \multicolumn{2}{c|}{RL+action dist.} & RL+GVF dist. \\
\midrule 
         {Env} &        BC &   DeepMDP~\citep{gelada2019deepmdp} & VPN~\citep{oh2017value} & CSSC~\citep{liu2020cross} & PSE~\citep{agarwal2021contrastive} &  \shortmethod{} (reward) \\
\midrule
bigfish               & -0.443686 & -0.296928 &  0.296928 &  0.116041 &  0.189761 &                    0.153584 \\
bossfight             &  0.319048 & -0.945238 & -0.757143 & -0.878571 & -0.355714 &                    0.385714 \\
caveflyer             & -0.033058 & -0.727273 & -0.685950 & -0.669421 & -0.221488 &                    0.123967 \\
chaser                &  0.148368 & -0.890208 & -0.848994 & -0.919057 & -0.348961 &                   -0.049456 \\
climber               &  0.631579 & -0.596491 & -1.000000 & -0.859649 & -0.263158 &                    1.894737 \\
coinrun               &  0.070671 & -0.742049 & -0.597173 & -0.597173 &  0.286926 &                    0.466431 \\
dodgeball             & -0.132653 & -0.244898 &  0.010204 & -0.102041 & -0.112245 &                   -0.030612 \\
fruitbot              & -0.373832 & -0.747664 & -0.780374 & -0.747664 & -0.262617 &                    0.238318 \\
heist                 & -0.637681 & -0.420290 & -0.362319 & -0.275362 &  0.034783 &                   -0.159420 \\
jumper                &  0.196078 & -0.549020 & -0.196078 & -0.333333 &  0.035294 &                    0.862745 \\
leaper                & -0.285714 &  0.074286 &  0.457143 &  0.497143 &  0.374857 &                   -0.091429 \\
maze                  & -0.368421 & -0.254386 & -0.245614 & -0.228070 &  0.021053 &                    0.122807 \\
miner                 & -0.060606 & -0.454546 & -0.393940 & -0.424243 &  0.127273 &                    0.121212 \\
ninja                 & -0.097015 & -0.868159 & -0.644279 & -0.666667 & -0.153731 &                    0.044776 \\
plunder               &  0.156863 & -0.768627 & -0.784314 & -0.784314 & -0.324706 &                   -0.078431 \\
starpilot             & -0.022989 & -0.772988 & -0.750000 & -0.767241 & -0.220690 &                    0.178161 \\
\midrule
    \end{tabular}}
    \caption{Average performance for 16 games of the Procgen benchmark~\citep{cobbe2020leveraging} across 5 random seeds. All scores are standardized by the performance of CQL~\citep{kumar2020conservative} on the downstream task (zero-shot generalization across the entire distribution of "easy" levels).}
    \label{tab:procgen_per_game}
\end{table}
\begin{tabular}{lrrrrrr}

\bottomrule
\end{tabular}

\paragraph{Entropy of PPO policies on Procgen}

While in some domains, policies can achieve optimality without much exploration, the Procgen benchmark requires PPO to have a non-zero entropy-boosting term (otherwise, results are suboptimal).
\begin{figure}
    \centering
    \includegraphics[width=0.7\linewidth]{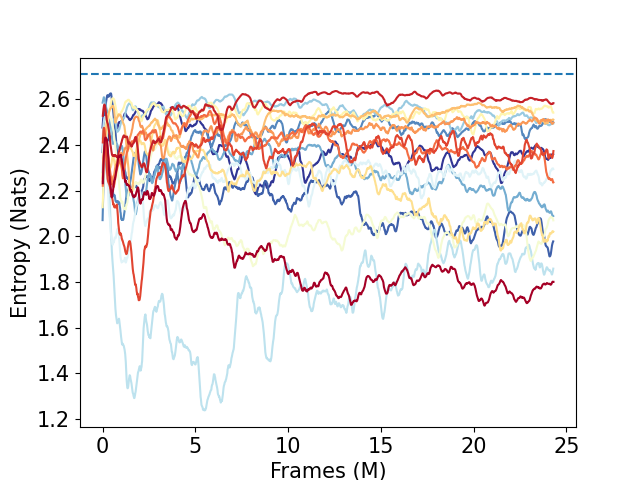}
    \caption{Average conditional entropy of online PPO policy during training phase for all 16 games of Procgen. Dotted line indicates theoretical maximum ($-\log15$).}
    \label{fig:entropy} 
\end{figure}

When the logging policies are high-entropy, many action sequences can possibly lead to high rewards. However, this does not imply that the observations in those sequences must have high similarity in latent space.

\paragraph{Choice of contrastive objective}
\label{sec:loss_ablation}
Here, we ablate the choice of the contrastive loss function used in \shortmethod{}. 
\begin{figure}[h!]
    \centering
    \includegraphics[width=\linewidth]{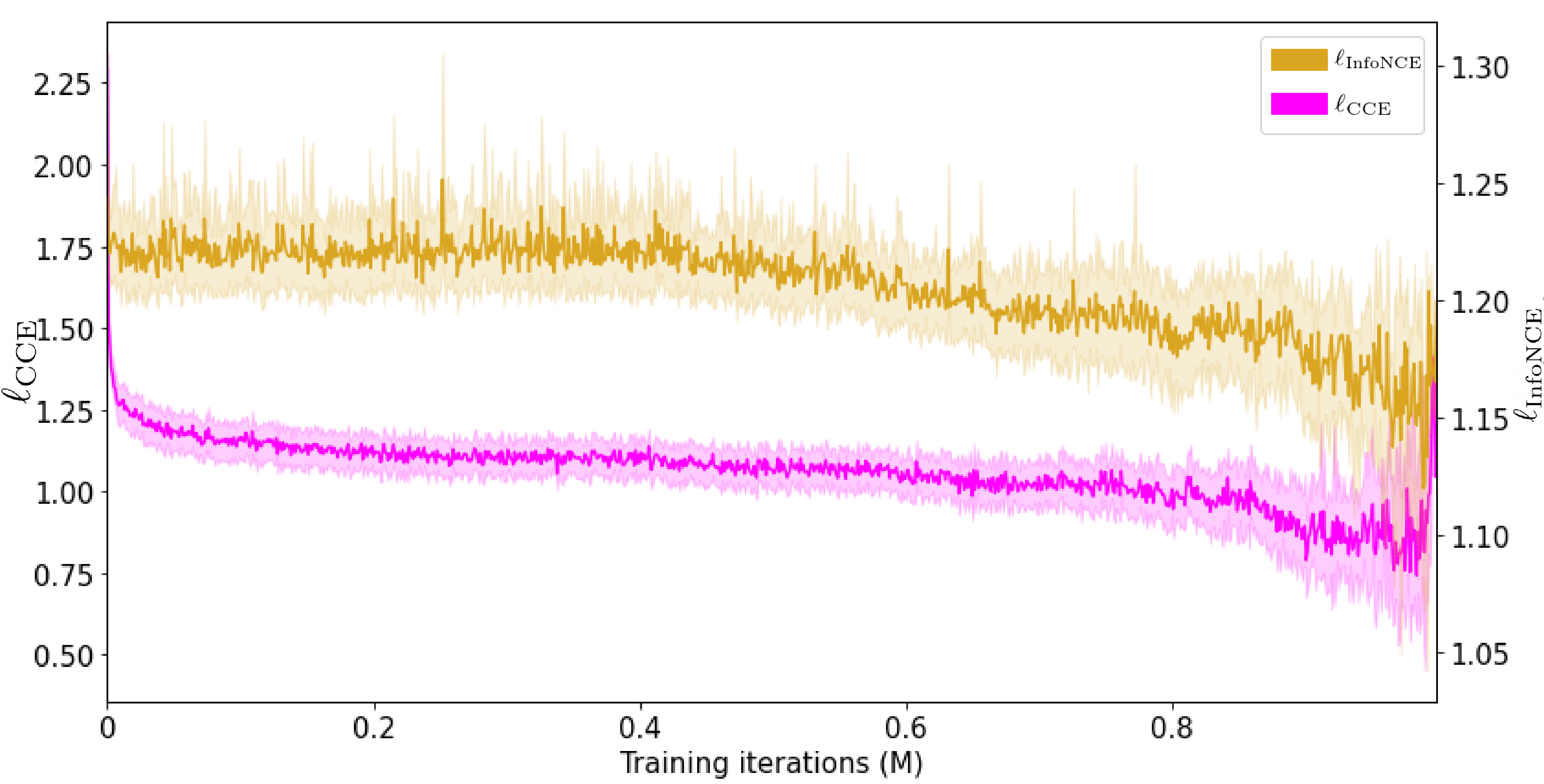}
    \caption{Comparison of two versions of our multi-class contrastive objective: (1) classification of labels via categorical cross entropy~\citep{he2020momentum} and (2) pairwise InfoNCE~\citep{oord2018representation}. Curves are averaged over 16 games and 5 random seeds.}
    \label{fig:loss_ablation}
\end{figure}

We define the vector distance to be the negative cosine similarity:
\begin{equation}
    d(\vec{s}_1,\vec{s}_2)=-\frac{\vec{s}_1^\top\vec{s}_2}{||\vec{s}_1||_2||\vec{s}_2||_2}
\end{equation}
The optimization objective is then taken to be the set-valued InfoNCE loss, defined for a set of positive samples $\cS_P$, set of negative samples $\cS_N$ and temperature parameter $\tau$: 
\begin{equation}
    \ell_\text{InfoNCE}(\cS_P,\cS_N)=-\log \frac{\sum_{\vec{s}_1,\vec{s}_2\in\cS_P}\exp(-d(\vec{s}_1,\vec{s}_2)\tau^{-1})}{\sum_{\vec{s}_1\in \cS_P,\vec{s}_2\in \cS_N}\exp(-d(\vec{s}_1,\vec{s}_2)\tau^{-1})}
\end{equation}

The batch version of the loss is defined as
\begin{equation}
    \ell_\text{InfoNCE}(\cB)=\mathbb{E}_{\cS_P,\cS_N\sim \cB}[\ell_{InfoNCE}(\cS_P,\cS_N)]
\end{equation}

Fig.~\ref{fig:loss_ablation} compares the loss used by \shortmethod{} with the alternative loss based on multi-class NCE. The loss based on categorical cross-entropy yields a more stable and lower error, hence we take $\ell_\text{NCE}=\ell_\text{CCE}$ in all experiments.

\paragraph{Ablation on hyperparameters}

\begin{figure}
    \centering
    \includegraphics[width=\linewidth]{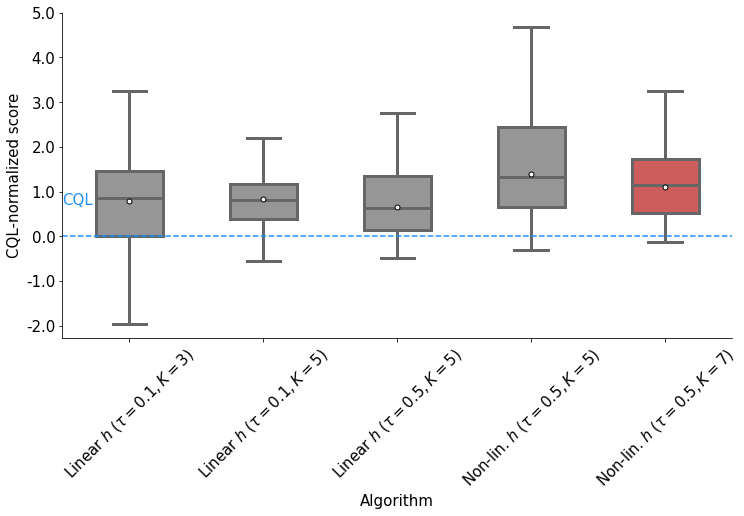}
    \caption{Ablation on \shortmethod{} hyperparameters: 1) softmax temperature $\tau$, 2) number of bins $K$ and 3) structure of projection network $h$.}
    \label{fig:ablation_hps}
\end{figure}

Fig.~\ref{fig:ablation_hps} shows performance of \shortmethod{} for various combinations of hyperparameters, for the GVF being the Q-value of each POMDP. We picked the last combination of hyperparameters, as it has one of the highest inter-game median values, and lowest inter-quartile range (i.e. more stable performance across seeds).

\end{document}